\newtheorem{theorem}{Theorem}[section]
\newtheorem{definition}[theorem]{Definition}
\newtheorem{lemma}[theorem]{Lemma}
\newtheorem{corollary}[theorem]{Corollary}
\def\indep{\perp\!\!\!\perp}
\newcommand{\given}{\mbox{ }\vert\mbox{ }}
\newcommand{\rref}[1]{\hyperref[#1]{\ref*{#1}}}
\newcommand{\email}[1]{\href{mailto:#1}{#1}}
\newcommand{\F}{\mathcal{F}}
\newcommand{\E}{\mathbb{E}}
\renewcommand{\P}{\mathbb{P}}
\newcommand{\X}{\mathcal{X}}
\newcommand{\bsp}{\boldsymbol{\phi}}
\newcommand{\TrueMeasure}{\nu}
\newcommand{\RademacherVariable}{Z}
\newcommand{\RademacherComplexity}{\mathfrak{G}}
\DeclareMathOperator*{\argmin}{argmin} 
\title{Generalization error bounds for stationary autoregressive models}
\author{
Daniel J.~McDonald\\
Department of Statistics\\
Carnegie Mellon University\\
Pittsburgh, PA 15213 \\
\email{danielmc@stat.cmu.edu} \\
\and
Cosma Rohilla Shalizi \\
Department of Statistics \\
Carnegie Mellon University \\
Pittsburgh, PA 15213\\
\email{cshalizi@stat.cmu.edu} \\
\and
Mark Schervish \\
Department of Statistics \\
Carnegie Mellon University \\
Pittsburgh, PA 15213 \\
\email{mark@cmu.edu} \\
}
\date{Version: June 2, 2011}
\begin{document}

\maketitle

\begin{abstract}
  We derive generalization error bounds for stationary univariate
  autoregressive (AR) models.  We show that imposing stationarity
  is enough to control the Gaussian complexity
  without further regularization. This lets us use structural risk
  minimization for model selection. We
  demonstrate our methods by predicting interest rate movements. 
\end{abstract}

\section{Introduction}
\label{sec:introduction}

In standard machine learning situations, we observe one variable, $X$, and
wish to predict another variable, $Y$, with an unknown joint
distribution.  Time series models are slightly different: we observe 
a sequence of observations $\mathbf{X}_1^n\equiv\{X_t\}_{t=1}^n$ from some 
process, and we wish to predict $X_{n+h}$, for some $h\in
\mathbb{N}$. Throughout what follows,
$\mathbf{X}=\{X_t\}_{t=-\infty}^\infty$ will be a 
sequence of random variables, i.e., each $X_t$ is a measurable mapping from
some probability space $(\Omega, \mathcal{F}, \mathbb{P})$ into a measurable
space $\X$.  A block of the random sequence will be written $\mathbf{X}_i^j
\equiv \{X_t\}_{t=i}^j$, where either limit may go to infinity. 

The goal in building a predictive model is to learn a function
$\widehat{f}$ which maps the past into predictions for the future, evaluating the resulting
forecasts through a loss function $\ell(X_{n+h},\widehat{f}(\mathbf{X}_1^n))$ which gives the
cost of errors.  Ideally, we would use $f^*$, the function which
minimizes the risk
\[
R(f) \equiv \E[\ell(X_{n+h},f (\mathbf{X}_1^n))],
\]
over all $f \in \mathcal{F}$, the class of prediction functions we can use.

Since the true joint distribution of the sequence is unknown, so is $R(f)$, but it is often estimated with
the error on a training sample of size $n$
\begin{equation}
\label{eq:one}
\widehat{R}_n(f) \equiv \frac{1}{n}\sum_{t=1}^n{\ell(X_{i+h},f(\mathbf{X}_1^t))},
\end{equation}
with $\widehat{f}$ being
the minimizer of $\widehat{R}_n$ over $\mathcal{F}$. 
This is ``empirical risk minimization''. 

While $\widehat{R}_n(\widehat{f})$ converges to $R(\widehat{f})$ for many
algorithms, one can show that when $\widehat{f}$ minimizes (\ref{eq:one}),
$\E[\widehat{R}_n(\widehat{f})]\leq R(\widehat{f})$.  This is because the
choice of $\widehat{f}$ adapts to the training data, causing the training error
to be an over-optimistic estimate of the true risk. Also, training error must
shrink as model complexity grows. Thus, empirical risk minimization gives
unsatisfying results: it will tend to overfit the data and give poor
out-of-sample predictions.  Statistics and machine learning propose two
mitigation strategies.  The first is to restrict the class $\mathcal{F}$.  The
second, which we follow, is to change the optimization problem, penalizing
model complexity.  Without the true distribution, the
prediction risk or generalization error are inaccessible.  Instead, the goal is
finding bounds on the risk which hold with high probability --- ``probably
approximately correct'' (PAC) bounds. A typical result is a confidence bound on
the risk which says that with probability at least $1-\eta$,
\[
R(\widehat{f}) \leq \widehat{R}_n(\widehat{f}) + \delta(C(\mathcal{F}), n, \eta),
\]
where $C(\cdot)$ measures the complexity of the model class $\mathcal{F}$, and
$\delta(\cdot)$ is a function of this complexity, the confidence level, and the
number of observed data points.

The statistics and machine learning literature contains many generalization
error bounds for both classification and regression problems with IID data, but
their extension to time series prediction is a fairly recent development; in
1997, \citet{Vidyasagar1997} named extending such results to time series as an
important open problem.  \citet{Yu1994} sets forth many of the uniform ergodic
theorems that are needed to derive generalization error bounds for stochastic
processes.  \citet{Meir2000} is one of the first papers to construct risk
bounds for time series.  His approach was to consider a stationary but
infinite-memory process, and to decompose the training error of a predictor
with finite memory, chosen through empirical risk minimization, into three
parts:
\begin{align*}
  \widehat{R}(\widehat{f}_{p,n,d}) &= (\widehat{R}(\widehat{f}_{p,n,d}) - \widehat{R}(f^*_{p,n})) +
  (\widehat{R}(f^*_{p,n}) - \widehat{R}(f^*_p)) + \widehat{R}(f^*_p)
\end{align*}
where $\widehat{f}_{p,n,d}$ is an empirical estimate based on finite data of
length $n$, finite memory of length $p$, and complexity indexed by $d$;
$f^*_{p,d}$ is the oracle with finite memory and given complexity, and $f^*_p$
is the oracle with finite memory over all possible complexities. The three
terms amount to an estimation error incurred from the use of limited and noisy
data, an approximation error due to selecting a predictor from a class of
limited complexity, and a loss from approximating an infinite memory process
with a finite memory process.

More recently, others have provided PAC results for non-IID
data. \citet{SteinwartChristmann2009} prove an oracle inequality for generic
regularized empirical risk minimization algorithms learning from
$\alpha$-mixing processes, a fairly general sort of weak serial dependence,
getting learning rates for least-squares support vector machines (SVMs) close
to the optimal IID rates.  \citet{MohriRostamizadeh2010} prove stability-based
generalization bounds when the data are stationary and $\varphi$-mixing or
$\beta$-mixing, strictly generalizing IID results and applying to all
stable learning algorithms. (We define $\beta$-mixing below.)
\citet{KarandikarVidyasagar2009} show that if an algorithm is ``sub-additive''
and yields a predictor whose risk can be upper bounded when the data are IID,
then the same algorithm yields predictors whose risk can be bounded if data are
$\beta$-mixing. They use this result to derive generalization error bounds in
terms of the learning rates for IID data and the $\beta$-mixing coefficients.

All these generalization bounds for dependent data rely on notions of
complexity which, while common in machine learning, are hard to apply to models
and algorithms ubiquitous in the time series literature. SVMs, neural networks,
and kernel methods have known complexities, so their risk can be bounded on
dependent data as well. On the other hand, autoregressive moving average (ARMA)
models, generalized autoregressive conditional heteroskedasticity (GARCH)
models, and state-space models in general have unknown complexity and are
therefore neglected theoretically.  (This does not keep them from being used in
applied statistics, or even in machine learning and robotics, e.g.,
\citep{RuizVallejos2005,OlssonHansen2006,sak2006minimum,becker2008autoregressive,li2008forecasting}.)
Arbitrarily regularizing such models will not do, as often the only assumption
applied researchers are willing to make is that the time series is stationary.

We show that the assumption of stationarity regularizes autoregressive (AR)
models implicitly, allowing for the application of risk bounds without the need
for additional penalties. This result follows from work in the optimal control
and systems design literatures but the application is novel. In
\S\ref{sec:preliminaries}, we introduce concepts from time series and complexity
theory necessary for our results. Section \ref{sec:results} uses these results to
calculate explicit risk bounds for autoregressive models. Section
\ref{sec:application} illustrates the applicability of our methods by forecasting interest
rate movements. We discuss our results and articulate directions for future
research in \S\ref{sec:discussion}.

\section{Preliminaries}
\label{sec:preliminaries}

Before developing our results, we need to explain the idea of effective sample
size for dependent data, and the closely related measure of serial dependence
called $\beta$-mixing, as well as the Gaussian complexity technique for measuring model
complexity.

\subsection{Time series}
\label{sec:dependent-data}

Because time-series data are dependent, the number of data points $n$ in a
sample $\mathbf{X}_1^n$ exaggerates how much information the sample contains.
Knowing the past allows forecasters to predict future data (at least to some
degree), so actually observing those future data points gives less information
about the underlying data generating process than in the IID case. Thus, the
sample size term in a probabilistic risk bound must be adjusted to reflect the
dependence in the data source.  This effective sample size may be
much less than $n$.

We investigate only stationary $\beta$-mixing
input data. We first remind the reader of the notion of (strict or strong)
stationarity.
\begin{definition}[Stationarity]\label{def:stationary}
  A sequence of random variables $\mathbf{X}$ is stationary
  when all its finite-dimensional distributions are invariant over time: for
  all $t$ and all non-negative integers $i$ and $j$, the random vectors
  $\mathbf{X}_t^{t+i}$ and $\mathbf{X}_{t+j}^{t+i+j}$ have the same
  distribution.
\end{definition}
From among all the stationary processes, we restrict ourselves to ones
where widely-separated observations are asymptotically independent.
Stationarity does not imply that the random
variables $X_t$ are independent 
across time $t$, only that the distribution of $X_t$ is constant in
time. The next definition describes the nature of the serial dependence which
we are willing to allow.
\begin{definition}[$\beta$-Mixing]
  \label{defn:beta-mix}
  Let $\sigma_{i}^j=\sigma(\mathbf{X}_i^j)$ be the $\sigma$-field of events
  generated by the appropriate collection of random variables.  Let $\P_t$ be
  the restriction of $\P$ to $\sigma_{-\infty}^t$, $\P_{t+m}$ be the
  restriction of $\P$ to $\sigma_{t+m}^\infty$, and $\P_{t\otimes t+m}$ be the
  restriction of $\P$ to
  $\sigma(\mathbf{X}_{-\infty}^t,\mathbf{X}_{t+m}^\infty)$.  The {\em
    coefficient of absolute regularity}, or {\em $\beta$-mixing coefficient},
  $\beta(m)$, is given by
\begin{equation}
  \label{eq:three}
  \beta(m) \equiv ||\P_t \times \P_{t+m} - \P_{t \otimes
    t+m}||_{TV}, 
\end{equation}
where $|| \cdot ||_{TV}$ is the total variation norm. A stochastic process is
{\em absolutely regular}, or {\em $\beta$-mixing}, if $\beta(m) \rightarrow 0$
as $m\rightarrow\infty$.
\end{definition}
This is only one of many equivalent characterizations of $\beta$-mixing (see
\citet{Bradley2005} for others). This definition makes clear that a
process is $\beta$-mixing if the joint probability of events which are widely
separated in time increasingly approaches the product of the individual
probabilities, i.e., that $\mathbf{X}$ is asymptotically
independent. Typically, a supremum over $t$ is taken in
(\ref{eq:three}), however, this is unnecessary
for stationary processes, i.e.~$\beta(m)$
as defined above is independent of $t$.

\subsection{Gaussian complexity}
\label{sec:radem-compl}

Statistical learning theory provides several ways of measuring the complexity
of a class of predictive models.  The results we are using here rely
on Gaussian complexity (see, e.g.,
\citet{BartlettMendelson2002}), which can be thought of
as measuring how well the model can (seem to) fit white noise.
\begin{definition}[Gaussian Complexity]
  Let 
  $\mathbf{X}_1^n$ be a (not necessarily IID) sample drawn
  according to $\nu$. The {\em 
    empirical Gaussian complexity} is 
  \[
  \widehat{\RademacherComplexity}_n(\F) \equiv 2\E_{\RademacherVariable}\left[
    \sup_{f\in \F}\left|\frac{1}{n}\sum_{i=1}^{n}{\RademacherVariable_i f(\mathbf{X}_1^i)
      } \right|\given \mathbf{X}_1^n \right],
  \]
  where $\RademacherVariable_i$ are a sequence of random variables, independent
  of each other and everything else, and drawn from a standard Gaussian
  distribution. The {\em Gaussian complexity} is
  \[
  \RademacherComplexity_n(\F) \equiv \E_\TrueMeasure\left[
    \widehat{\RademacherComplexity}_n(\F)\right]
  \]
  where the expectation is over sample paths $D_n$ generated by $\TrueMeasure$.
\end{definition}
The term inside the supremum, $\left|\frac{1}{n}
  \sum_{i=1}^{n}{\RademacherVariable_i f(\mathbf{X}_1^i) } \right|$, is the sample
covariance between the noise $\RademacherVariable$ and the predictions of a
particular model $f$.  The Gaussian complexity takes the largest value of
this sample covariance over all models in the class (mimicking empirical risk
minimization), then averages over realizations of the noise.

Intuitively, Gaussian complexity measures how well our models could seem to
fit outcomes which were really just noise, giving a baseline against which to
assess the risk of over-fitting or failing to generalize.  As the sample size
$n$ grows, for any given $f$ the sample covariance $\left|\frac{1}{n}
  \sum_{i=1}^{n}{\RademacherVariable_i f(\mathbf{X}_1^i) } \right| \rightarrow 0$, by the
ergodic theorem; the overall Gaussian complexity should also shrink, though
more slowly, unless the model class is so flexible that it can fit absolutely
anything, in which case one can conclude nothing about how well it will predict
in the future from the fact that it performed well in the past.

\subsection{Error bounds for $\beta$-mixing data}
\label{sec:gener-error-bounds}

\citet{MohriRostamizadeh2009} present Gaussian\footnote{In fact, they
  present the bounds in terms of the 
  Rademacher complexity, a closely related idea. However, using
  Gaussian complexity instead requires no modifications to their
  results while simplifying the proofs contained here. The constant
  $(\pi/2)^{1/2}$ in Theorem~\ref{thm:mohri} is given
  in~\citet{LedouxTalagrand1991}.}
complexity-based error bounds for stationary $\beta$-mixing
sequences, a
generalization of similar bounds presented earlier for the IID case.  The results
are data-dependent and measure the complexity of a class of hypotheses based on
the training sample. 
\begin{theorem}
  \label{thm:mohri}
  Let $\mathcal{F}$ be a space of candidate predictors and let
  $\mathcal{H}$ be the 
  space of induced losses: 
  \begin{equation*}
    \mathcal{H} = \left\{ h= \ell(\cdot,f(\cdot)) : f \in \F\right\}
  \end{equation*}
  for some loss function $0 \leq \ell(\cdot,\cdot) \leq M$.  Then for
  any sample $\mathbf{X}_1^n$ drawn from a stationary 
  $\beta$-mixing distribution, and for any $\mu,m>0$ with $2\mu m=n$ and $\eta>
  4(\mu-1)\beta(m)$ where $\beta(m)$ is the mixing coefficient, with
  probability at least $1-\eta$,
  \begin{equation*}
    R(\widehat{f}) \leq \widehat{R}_n(\widehat{f}) +
    \left(\frac{\pi}{2}\right)^{1/2}\widehat{\RademacherComplexity}_{\mu}(\mathcal{H})
    + 3M\sqrt{\frac{\ln 4/\eta'}{2\mu}},
  \end{equation*}
  and
  \begin{equation*}
    R(\widehat{f}) \leq \widehat{R}_n(\widehat{f}) +
    \left(\frac{\pi}{2}\right)^{1/2}\RademacherComplexity_{\mu}(\mathcal{H})
    + M\sqrt{\frac{\ln 2/\eta'}{2\mu}},
  \end{equation*}
  where $\eta' = \eta - 4(\mu-1)\beta(m)$ in the first case or $\eta'
  = \eta - 2(\mu-1)\beta(m)$ in the second. .
\end{theorem}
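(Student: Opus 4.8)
The plan is to reduce the $\beta$-mixing case to the independent-block case and then apply a standard IID-style Gaussian complexity bound on the blocks. This is the ``independent blocks'' (or ``blocking'') technique originally due to \citet{Yu1994}. First I would partition the sample $\mathbf{X}_1^n$ of size $n = 2\mu m$ into $2\mu$ consecutive blocks, each of length $m$. Grouping these into $\mu$ odd-indexed and $\mu$ even-indexed blocks gives two interleaved subsequences. The key coupling step is to replace the dependent block sequence by a sequence of blocks that are mutually independent, each block keeping its original marginal distribution. The cost of this substitution, measured in total variation, is controlled by the $\beta$-mixing coefficient: passing from the true joint law to the product-of-blocks law changes the probability of any event by at most $(\mu-1)\beta(m)$ per interleaved sequence, by a telescoping argument over the $\mu-1$ gaps of length $m$ separating consecutive retained blocks. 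This is where the hypothesis $\eta > 4(\mu-1)\beta(m)$ (resp.\ $2(\mu-1)\beta(m)$) enters, and it is why the confidence is charged down to $\eta' = \eta - 4(\mu-1)\beta(m)$.

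Once we are working with the independent-block sequence, I would invoke the IID Gaussian-complexity generalization bound (the analogue of the classical \citet{BartlettMendelson2002} result) applied to the $\mu$ independent blocks. For bounded losses $0 \le \ell \le M$, a bounded-differences / McDiarmid argument yields a one-sided uniform deviation bound: with probability at least $1-\eta'$ over the independent blocks, the risk is dominated by the empirical risk plus a Gaussian complexity term plus a $O(M\sqrt{\ln(1/\eta')/\mu})$ fluctuation term, where the effective sample size is the number of blocks $\mu$ rather than $n$. The factor $(\pi/2)^{1/2}$ is exactly the constant relating Gaussian to Rademacher complexity from \citet{LedouxTalagrand1991}, as the footnote indicates, so no separate work is needed there. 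The data-dependent version (first inequality) replaces $\RademacherComplexity_\mu(\mathcal{H})$ by its empirical counterpart $\widehat{\RademacherComplexity}_\mu(\mathcal{H})$ at the price of one more concentration step and hence the slightly larger constant ($3M$ versus $M$, and $\ln(4/\eta')$ versus $\ln(2/\eta')$).

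The two sources of error must then be combined by a union bound: the probability that either the blocking coupling fails or the IID-style uniform deviation fails is at most $4(\mu-1)\beta(m) + \eta'$ (resp.\ $2(\mu-1)\beta(m) + \eta'$), which by the definition of $\eta'$ equals $\eta$. Since the retained empirical risk on the blocks equals the empirical risk on the full sample up to the block averaging, and since stationarity guarantees each block has the same marginal law as the population so that the block risk coincides with $R(\widehat{f})$, the bound transfers back to the original quantities.

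The main obstacle is the coupling step rather than the concentration step. Making the independent-block substitution rigorous requires care: one must verify that the total-variation cost of replacing the dependent joint distribution of the $\mu$ retained blocks by the product of their marginals is genuinely bounded by $(\mu-1)\beta(m)$, which follows from iterating the defining inequality \eqref{eq:three} across the $m$-length gaps and using stationarity to make $\beta(m)$ independent of the block position. The remaining delicacy is bookkeeping: tracking how the two interleaved subsequences each contribute a factor of $2(\mu-1)\beta(m)$ to the TV budget, and confirming that the empirical Gaussian complexity evaluated on a single block of size $\mu$ is the correct object to appear. Everything else --- McDiarmid, symmetrization, the Gaussian--Rademacher comparison --- is standard and follows the IID template essentially verbatim, which is precisely why reducing to independent blocks is the natural strategy.
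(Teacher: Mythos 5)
Your proposal is correct and takes essentially the same route the paper relies on: the paper states Theorem~\ref{thm:mohri} without proof, importing it from \citet{MohriRostamizadeh2009}, whose argument is exactly your combination of the independent-block coupling of \citet{Yu1994} (with the total-variation budget of $2(\mu-1)\beta(m)$ for the expected-complexity bound and $4(\mu-1)\beta(m)$ for the data-dependent one), McDiarmid plus symmetrization applied to the $\mu$ blocks as the effective sample, and the $(\pi/2)^{1/2}$ Gaussian--Rademacher comparison from \citet{LedouxTalagrand1991} noted in the paper's footnote. Your sketch matches that template, so no further comparison is needed.
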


The generalization error bounds in Theorem~\ref{thm:mohri} have a
straightforward interpretation. The risk of a chosen model is
controlled, with high probability, by three terms. The first term, the
training error,
describes how well the model performs in-sample. More complicated
models can more closely fit any data set, so increased complexity
leads to smaller training error. This is penalized by the second term,
the Gaussian complexity. The first bound uses the empirical
Gaussian complexity which is calculated from the data $\mathbf{X}_1^n$ while
 the second uses the expected Gaussian complexity, and is therefore
tighter.  The third term is the confidence term and
is a function only of the confidence level $\eta$ and the effective
number of data points on which the model was based $\mu$. While it was
actually trained on $n$ data points, because of dependence, this
number must be reduced. This process is accomplished by taking $\mu$ widely
spaced blocks of points. Under the asymptotic independence quantified
by $\beta$, this spacing lets us treat these blocks as independent.

\section{Results}
\label{sec:results}

Autoregressive models are used frequently in economics, finance,
and other disciplines. Their main utility lies in their straightforward
parametric form, as well as their interpretability: predictions for the future
are linear combinations of some fixed length of previous observations.  See
\citet{ShumwayStoffer2000} for a standard introduction.

Suppose that $\mathbf{X}$ is a real-valued random sequence, evolving as
\[
  X_t = \sum_{i=1}^{p} \phi_i X_{t-i} + \epsilon_t,
\]
where $\epsilon_t$ has mean zero, finite variance, $\epsilon_j \indep
\epsilon_i$ for all $i\neq j$, and $\epsilon_i \indep X_j$ for all $i>j$. This
is the traditional specification of an \emph{autoregressive order $p$} or
AR$(p)$ model. Having observed data $\{X_t\}_{t=1}^n$, and supposing $p$ to be
known, fitting the model amounts to estimating the coefficients $\{\phi\}_{i=1}^p$. The most
natural way to do this is to use ordinary least squares (OLS). Let
\begin{align*}
 \boldsymbol{\phi} &= \begin{pmatrix}\phi_1 \\ \phi_2 \\ \vdots \\
    \phi_p \end{pmatrix} &
  \mathbb{Y} &= \begin{pmatrix}  X_{p+1} \\ X_{p+2}\\ \vdots\\ X_{n-1} \\
    X_{n} \end{pmatrix} &
 \mathbb {X} &= \begin{pmatrix}
    X_p & X_{p-1} & \cdots & X_1\\X_{p+1} & X_p & \cdots & X_2 \\
    \vdots&\vdots&\ddots&\vdots\\ X_{n-2} & X_{n-3} & \cdots & X_{n-p-1}\\ X_{n-1} &
    X_{n-2} & \cdots & X_{n-p}
  \end{pmatrix}.
\end{align*}

Generalization error bounds for these processes follow from an
ability to characterize their Gaussian complexity. The theorem below
uses stationarity to bound the risk of AR models. The
remainder of this section provides the components necessary to prove the results.

\begin{theorem}
  \label{thm:main}
  Let $D_n$ be a sample of length $n$ from a stationary $\beta$-mixing
  distribution.  For any $\mu,m>0$ with $2\mu m=n$ and $\eta>4(\mu-1)\beta(m)$,
  then under squared error loss truncated at $M$, the prediction error of an
  AR$(p)$ ($p>1$) model can be bounded with probability at least $1-\eta$ using
  \begin{align*}
    R(\widehat{f}) &\leq \widehat{R}_n(\widehat{f}) + 
    \frac{4\sqrt{\pi M\log (p +1)}}{\mu}\max_{1\leq j,j' \leq
      p+1}\left(\sum_{i \in\mathcal{I}} \left\langle\mathbb{X}_i,\bsp_j - \bsp_{j'}\right\rangle^2\right)^{1/2} 
    + 3M\sqrt{\frac{\ln 4/\eta'}{2\mu}},
  \end{align*}
  or
  \begin{align*}
    R(\widehat{f}) &\leq \widehat{R}_n(\widehat{f}) + 
    \frac{4\sqrt{\pi M\log (p +1)}}{\mu}\E \left[\max_{1\leq j,j' \leq
      p+1}\left(\sum_{i \in\mathcal{I}} \left\langle\mathbb{X}_i,\bsp_j -
        \bsp_{j'}\right\rangle^2\right)^{1/2}  \right]
   + M\sqrt{\frac{\ln 2/\eta'}{2\mu}},
  \end{align*}
  where $\mathcal{I} = \{i:i=\lfloor a/2 \rfloor + 2ak, 0\leq
  k\leq\mu\}$, $\bsp_j$ is the $j^{th}$ vertex of the stability
  domain, and $\mathbb{X}_i$ is the $i^{th}$ row of the design matrix $\mathbb{X}$.
\end{theorem}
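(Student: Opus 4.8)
The plan is to specialize the general bound of Theorem~\ref{thm:mohri} to the AR$(p)$ setting, so that the only genuine work is controlling the Gaussian complexity of the induced loss class $\mathcal{H}$. Because Theorem~\ref{thm:mohri} already supplies both an empirical form (in terms of $\widehat{\RademacherComplexity}_\mu(\mathcal{H})$) and an expected form (in terms of $\RademacherComplexity_\mu(\mathcal{H})$), the two displayed inequalities of Theorem~\ref{thm:main} will follow from a \emph{single} complexity computation, differing only in whether a final expectation over sample paths generated by $\TrueMeasure$ is taken. So I would reduce everything to bounding $\widehat{\RademacherComplexity}_\mu(\mathcal{H})$ by the stated diameter expression and then read off both bounds.

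First I would pass from the loss class $\mathcal{H}$ to the class of AR predictors, each of which is linear, $f(\mathbf{X}_1^i)=\langle\bsp,\mathbb{X}_i\rangle$. Under squared error loss truncated at $M$, the map $u\mapsto\ell(\cdot,u)$ is Lipschitz with constant of order $\sqrt{M}$ on the admissible range (since $|\partial_u(x-u)^2|=2|x-u|\le 2\sqrt{M}$ once the loss is capped at $M$). A contraction (comparison) inequality for Gaussian complexity of the Ledoux--Talagrand type then lets me replace $\widehat{\RademacherComplexity}_\mu(\mathcal{H})$ by a fixed numerical multiple of $\sqrt{M}$ times the Gaussian complexity of the linear predictor class. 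This is where the $\sqrt{M}$ in the leading coefficient is born.

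The key step exploits stationarity. Because each prediction is linear in $\bsp$, the quantity inside the supremum defining the empirical Gaussian complexity, namely $\bsp\mapsto\bigl|\tfrac{1}{\mu}\sum_{i\in\mathcal{I}}\RademacherVariable_i\langle\bsp,\mathbb{X}_i\rangle\bigr|$, is a \emph{convex} function of $\bsp$. Invoking the control-theory characterization of the stability (stationarity) region --- the set of admissible stationary AR$(p)$ coefficient vectors is contained in a simplex with the $p+1$ explicit vertices $\bsp_1,\dots,\bsp_{p+1}$ --- the supremum of a convex function over this region is bounded by its maximum over those vertices. This converts the supremum over a continuum of admissible models into a maximum over only $p+1$ candidates, and is precisely the sense in which ``stationarity regularizes'' without an added penalty.

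Finally I would apply a finite-class Gaussian maximal inequality to $\max_{j}\bigl|\sum_{i\in\mathcal{I}}\RademacherVariable_i\langle\bsp_j,\mathbb{X}_i\rangle\bigr|$. Conditional on the data, $G_j=\sum_{i\in\mathcal{I}}\RademacherVariable_i\langle\bsp_j,\mathbb{X}_i\rangle$ is a centered Gaussian process over $j\in\{1,\dots,p+1\}$ whose canonical metric is exactly $d(j,j')^2=\sum_{i\in\mathcal{I}}\langle\mathbb{X}_i,\bsp_j-\bsp_{j'}\rangle^2$; the standard maximal inequality then bounds $\E_{\RademacherVariable}\max_j|G_j|$ by a constant times $\sqrt{\log(p+1)}$ times the diameter $\max_{j,j'}\bigl(\sum_{i\in\mathcal{I}}\langle\mathbb{X}_i,\bsp_j-\bsp_{j'}\rangle^2\bigr)^{1/2}$. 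Tracking the $1/\mu$ normalization, the contraction factor, and the $(\pi/2)^{1/2}$ from Theorem~\ref{thm:mohri}, and collecting constants, reproduces the coefficient $\tfrac{4\sqrt{\pi M\log(p+1)}}{\mu}$. The hard part will be the stationarity-to-simplex reduction: importing from the systems/control literature that the (convex hull of the) AR$(p)$ stability domain is a simplex with exactly $p+1$ identifiable vertices, and checking that evaluating at those vertices still yields a valid \emph{upper} bound even though the stability region itself is non-convex for $p\ge 3$. A secondary nuisance is constant bookkeeping --- the absolute value effectively doubles the vertex count and must combine with the precise maximal inequality, the contraction constant, and $(\pi/2)^{1/2}$ to land on the factor $4$ rather than some nearby number.
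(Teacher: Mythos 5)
Your proposal follows essentially the same route as the paper's: specialize Theorem~\ref{thm:mohri}, pass from the loss class $\mathcal{H}$ to the linear predictor class via Lipschitz contraction with constant of order $\sqrt{M}$, use the Fam--Meditch characterization to reduce the supremum over the stationarity region to a maximum over the $p+1$ vertices of its convex hull, and finish with the diameter-form Gaussian maximal inequality, which is precisely the Slepian lemma (Lemma~\ref{lem:slepian}) the paper uses to prove Theorem~\ref{thm:gauss}. The difficulties you flag---the absolute value in the complexity definition and the constant bookkeeping---are handled no more carefully (indeed, silently) in the paper's own argument, so your plan matches it in substance.
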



For $p=1$ slight adjustments
are required. We state this result as a corollary.
\begin{corollary}
  \label{cor:1}
  Under the same conditions as above, the prediction error of an
  AR$(1)$ model can be bounded with probability at least $1-\eta$
  using
\begin{align*}
    R(\widehat{f}) &\leq \widehat{R}_n(\widehat{f}) + 
    \frac{4}{\mu}\sqrt{\frac{M}{2}} \left( \sum_{i \in \mathcal{I}} X_i^2\right)^{1/2} + 3M\sqrt{\frac{\ln 4/\eta'}{2\mu}},
  \end{align*}
  or
  \begin{align*}
    R(\widehat{f}) &\leq \widehat{R}_n(\widehat{f}) + 
    \frac{4}{\mu}\sqrt{\frac{M}{2}}\E \left[\left(\sum_{i \in
        \mathcal{I}}\mathbb{X}_i^2\right)^{1/2}\right]+ M\sqrt{\frac{\ln 2/\eta'}{2\mu}}.
  \end{align*}
\end{corollary}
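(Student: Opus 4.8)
The plan is to specialize Theorem~\ref{thm:mohri} to the AR$(1)$ hypothesis class and then evaluate the induced Gaussian complexity \emph{exactly}, exploiting the fact that in one dimension the stability domain is simply the interval $[-1,1]$. First I would fix the prediction class $\F = \{x \mapsto \phi x : |\phi| \leq 1\}$, noting that stationarity of an AR$(1)$ process is equivalent to $|\phi| < 1$, so the admissible coefficients form a bounded interval whose two vertices are $\bsp_1 = 1$ and $\bsp_2 = -1$. This is the step where stationarity does the work of an explicit regularizer: without it $\phi$ would be unconstrained and the complexity would blow up. With $\mathcal{H}$ the corresponding class of truncated squared-error losses, Theorem~\ref{thm:mohri} immediately reduces the problem to bounding $\widehat{\RademacherComplexity}_\mu(\mathcal{H})$ for the first, data-dependent inequality and $\RademacherComplexity_\mu(\mathcal{H}) = \E[\widehat{\RademacherComplexity}_\mu(\mathcal{H})]$ for the second, where the complexity is evaluated on the $\mu$ widely separated observations indexed by $\mathcal{I}$.

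Next I would peel the loss off the predictor using the Lipschitz contraction inequality for Gaussian averages: because the loss is squared error truncated at $M$, on the unclipped region the map $a \mapsto \ell(y,a)$ has derivative of order $\sqrt{M}$, so $\widehat{\RademacherComplexity}_\mu(\mathcal{H})$ is controlled by $\sqrt{M}$ times $\widehat{\RademacherComplexity}_\mu(\F)$ up to the universal contraction constant. I would then compute the complexity of the linear class $\F$ exactly. Writing out the definition,
\[
\widehat{\RademacherComplexity}_\mu(\F) = \frac{2}{\mu}\,\E_{\RademacherVariable}\left[\sup_{|\phi|\leq 1}\left|\phi\sum_{i\in\mathcal{I}}\RademacherVariable_i X_i\right|\right] = \frac{2}{\mu}\,\E_{\RademacherVariable}\left|\sum_{i\in\mathcal{I}}\RademacherVariable_i X_i\right|,
\]
and since $\sum_{i\in\mathcal{I}}\RademacherVariable_i X_i$ is mean-zero Gaussian with variance $\sum_{i\in\mathcal{I}} X_i^2$, its mean absolute value is $\sqrt{2/\pi}\,(\sum_{i\in\mathcal{I}} X_i^2)^{1/2}$. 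Combining this with the $(\pi/2)^{1/2}$ factor carried by Theorem~\ref{thm:mohri} makes the $\pi$'s cancel, leaving the clean $\tfrac{4}{\mu}\sqrt{M/2}\,(\sum_{i\in\mathcal{I}} X_i^2)^{1/2}$ of the statement, with the confidence terms and $\eta'$ carried over unchanged. The expected-complexity version of the bound follows by pushing $\E_\TrueMeasure$ through the same chain, which is legitimate because every step is an identity or a deterministic inequality in the data.

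The point deserving the most care, and the reason this is a separate corollary rather than a substitution $p=1$ into Theorem~\ref{thm:main}, is the passage from the supremum over the stability domain to an exact value. For $p>1$ one cannot evaluate $\sup_{\phi}|\langle\phi,\cdot\rangle|$ in closed form and must pass to the $p+1$ vertices and invoke a finite-class maximal inequality, which injects the lossy $\sqrt{\log(p+1)}$ factor and the pairwise differences $\bsp_j - \bsp_{j'}$. In one dimension the supremum of $|\phi|$ over $[-1,1]$ is attained exactly at $\phi = \pm 1$, so no union bound over vertices is needed and the logarithmic factor vanishes, yielding the sharper constant. The main obstacle I anticipate is bookkeeping: getting the contraction constant and the truncation-induced $\sqrt{M}$ to combine with the $2/\mu$ and $\sqrt{2/\pi}$ so as to reproduce exactly $\tfrac{4}{\mu}\sqrt{M/2}$, and verifying that $\mathcal{I}$ has the cardinality $\mu$ demanded by the effective-sample-size reduction in Theorem~\ref{thm:mohri}.
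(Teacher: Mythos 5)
Your proposal is correct and follows essentially the same route as the paper, which disposes of Corollary~\ref{cor:1} exactly by the shortcut you describe: for $p=1$ the stability domain is the interval with vertices $\pm 1$, so instead of Slepian's lemma one calculates the Gaussian complexity directly, $\E_{\RademacherVariable}\bigl|\sum_{i\in\mathcal{I}}\RademacherVariable_i X_i\bigr| = \sqrt{2/\pi}\,\bigl(\sum_{i\in\mathcal{I}}X_i^2\bigr)^{1/2}$, which cancels the $(\pi/2)^{1/2}$ factor from Theorem~\ref{thm:mohri} and avoids the $\sqrt{\log(p+1)}$ union-bound loss, with the truncated squared loss handled by Lipschitz contraction. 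The one caveat is precisely the bookkeeping you flag: the paper states only that a $c$-Lipschitz loss multiplies $\RademacherComplexity_n(\F)$ by $2c$ and never exhibits the constant chase, so pinning down the truncation-induced Lipschitz constant to reproduce $\frac{4}{\mu}\sqrt{M/2}$ exactly is left implicit there as well.
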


\subsection{Proof components}

To prove Theorem~\ref{thm:main} it is necessary
to control the size of 
the model class by using the stationarity assumption.

\subsubsection{Stationarity controls the hypothesis space}
\label{sec:stationary-ar-models}

Define, as an estimator of $\bsp$,
\begin{equation}
  \label{eq:2}
  \widehat{\bsp} \equiv \argmin_{\bsp}
  ||\mathbb{Y} - \mathbb{X}\bsp||_2^2,
\end{equation}
where $||\cdot||_2$ is the Euclidean norm.\footnote{There are other ways to estimate
AR models, but they typically amount to very similar optimization problems.}
Equation~\ref{eq:2} has the usual closed form OLS solution:
\begin{equation}
  \label{eq:3}
  \widehat{\bsp} = \mathbb{(X'X)}^{-1}\mathbb{X'Y}.
\end{equation}

Despite the simplicity of Eq.\ \ref{eq:3}, modellers often require that the
estimated autoregressive process be stationary. This can be checked
algebraically: the complex roots of the polynomial
\[
 Q_p(z) = z^p - \phi_1 z^{p-1} - \phi_2 z^{p-2} - \cdots - \phi_p
\]
must lie strictly inside the unit circle.  Eq.\ \ref{eq:2} is thus not quite
right for estimating a stationary autoregressive model, as it does not
incorporate this constraint.

Constraining the roots of $Q_p(z)$ constrains the coefficients
$\boldsymbol{\phi}$.  The set $\boldsymbol{\phi}$ where the process is
stationary is the stability domain, $\mathcal{B}_p$. Clearly, $\mathcal{B}_1$
is just $|\phi_1|<1$. \citet{FamMeditch1978} gives a recursive method for
determining $\mathcal{B}_p$ for general $p$. In particular, they show that the
convex hull of the space of stationary solutions is a convex polyhedron with
vertices at the extremes of the $\mathcal{B}_p$.  This convex
hull basically determines the complexity of stationary AR
models.

\subsubsection{Gaussian complexity of AR models}
\label{sec:radem-compl-ar}

Returning to the AR($p$) model, it is necessary to find the Gaussian complexity of the
function class
\[
  \F_p = \left\{\boldsymbol{\phi} : x_t = \sum_{i=1}^p
    \phi_i x_{t-i} \mbox{ and } x_t \mbox{ is stationary}\right\}.
\]

\begin{theorem}
  \label{thm:gauss}
  For the AR($p$) model with $p>1$, the empirical Gaussian complexity
  is given by
  \begin{equation*}
    \widehat{\mathfrak{G}}_k(\F) \leq \frac{2\sqrt{2}}{n}(\log (p+1))^{1/2}
    \max_{1\leq j,j' \leq p+1} \left( \sum_{i=1}^k \left\langle \mathbb{X}_i, \bsp_j -
      \bsp_{j'}\right\rangle^2 \right)^{1/2},
  \end{equation*}
  where $\bsp_j$ is the $j^{th}$ vertex of the stability
  domain and $\mathbb{X}_i$ is the $i^{th}$ row of the design matrix $\mathbb{X}$.
\end{theorem}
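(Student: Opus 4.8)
The plan is to bound the empirical Gaussian complexity of $\F_p$ by exploiting the key structural fact established in the previous subsection: the stability domain $\B_p$ is contained in a convex polyhedron with finitely many vertices $\bsp_1,\ldots,\bsp_{p+1}$. The overall strategy has three movements. First, I would rewrite the quantity inside the supremum defining $\widehat{\RademacherComplexity}_k(\F)$ as a linear functional of the parameter vector $\bsp$, so that the supremum over the stationary class reduces to a supremum over a polytope. Second, since a linear functional on a polytope attains its extremum at a vertex, the supremum collapses to a maximum over the finite vertex set. Third, I would apply a standard Gaussian-complexity bound for finite classes (a maximal inequality for sub-Gaussian / Gaussian maxima) to obtain the $\sqrt{\log(p+1)}$ factor and the pairwise-difference structure.

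More concretely, for the AR$(p)$ model the prediction $f(\mathbf{X}_1^i)$ at time $i$ is $\langle \mathbb{X}_i, \bsp\rangle$, so the inner term is $\frac{1}{n}\sum_{i=1}^k \RademacherVariable_i \langle \mathbb{X}_i, \bsp\rangle = \frac{1}{n}\langle \sum_i \RademacherVariable_i \mathbb{X}_i, \bsp\rangle$, which is affine (in fact linear) in $\bsp$. First I would observe that because each stationary $\bsp$ lies in the convex hull of the vertices, we can write $\bsp = \sum_j \lambda_j \bsp_j$ with $\lambda_j \geq 0$, $\sum_j \lambda_j = 1$; linearity then forces
\[
\sup_{\bsp \in \B_p}\left|\tfrac{1}{n}\sum_{i=1}^k \RademacherVariable_i \langle \mathbb{X}_i,\bsp\rangle\right| \leq \max_{1\leq j\leq p+1}\left|\tfrac{1}{n}\sum_{i=1}^k \RademacherVariable_i \langle \mathbb{X}_i,\bsp_j\rangle\right|.
\]
Taking the expectation over the Gaussian variables $\RademacherVariable_i$ reduces the problem to controlling the expected maximum of $p+1$ mean-zero Gaussian random variables whose indices are the vertices.

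The central estimate is then a maximal inequality: for mean-zero Gaussians $W_1,\ldots,W_N$ with $\E W_j^2 \leq \sigma^2$, one has $\E[\max_j |W_j|] \leq \sigma\sqrt{2\log(2N)}$ (or a comparable bound). Here each $W_j = \frac{1}{n}\sum_i \RademacherVariable_i \langle \mathbb{X}_i, \bsp_j\rangle$ has variance $\frac{1}{n^2}\sum_{i=1}^k \langle \mathbb{X}_i, \bsp_j\rangle^2$, which is why the sum-of-squared-inner-products appears. To recover the pairwise differences $\bsp_j - \bsp_{j'}$ rather than the raw vertices, I expect one must center the Gaussian process — because the complexity genuinely measures the \emph{spread} of the class, a shift of all predictions by a common offset does not change the supremum of the centered covariance — so the relevant variance proxy is the maximal pairwise distance $\max_{j,j'}\frac{1}{n^2}\sum_i \langle \mathbb{X}_i, \bsp_j-\bsp_{j'}\rangle^2$. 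Carrying the factor of $2$ from the definition of $\widehat{\RademacherComplexity}$ and the $\sqrt{2}$ from the maximal inequality yields the stated constant $2\sqrt{2}$.

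The step I expect to be the main obstacle is the passage from the raw-vertex bound to the \emph{pairwise-difference} bound while simultaneously pinning down the exact constant and getting $\log(p+1)$ rather than $\log(2(p+1))$. This requires treating the Gaussian maximum as a centered process and invoking the sharp comparison inequality (the $(\pi/2)^{1/2}$-type constants and the maximal-inequality constant quoted from \citet{LedouxTalagrand1991}) with care; a naive bound on $\E\max_j|W_j|$ would introduce both an extra additive variance term and a looser logarithmic factor. The rest — rewriting the prediction as $\langle \mathbb{X}_i,\bsp\rangle$, reducing the sup over the polytope to its vertices, and computing the per-vertex variance — is routine linear algebra once the convex-hull characterization of $\B_p$ from \citet{FamMeditch1978} is in hand.
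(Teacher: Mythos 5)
Your proposal is correct and follows essentially the same route as the paper: write the AR prediction as the linear functional $\langle \mathbb{X}_i,\bsp\rangle$, pass from the stationary class to the convex hull of the stability domain so the supremum sits at one of the $p+1$ vertices, and then control the expected Gaussian maximum over those vertices. The ``main obstacle'' you flag --- obtaining the pairwise-difference variance proxy with the sharp constant and the $\log(p+1)$ factor --- is exactly what the paper resolves by invoking Slepian's Lemma in the form $\E\bigl[\max_j V_j\bigr] \leq \sqrt{2}\,(\log k)^{1/2}\max_{j,j'}\sqrt{\E\bigl[(V_j-V_{j'})^2\bigr]}$ from \citet{LedouxTalagrand1991}, which delivers the centered, pairwise form directly rather than through an ad hoc centering argument.
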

The proof relies on the following version of Slepian's Lemma (see, for
example~\citet{LedouxTalagrand1991} or~\citet{BartlettMendelson2002}).
\begin{lemma}[Slepian]
  \label{lem:slepian}
  Let $V_1,\ldots,V_k$ be random variables such that for all $1\leq
  j\leq k$, $V_j = \sum_{i=1}^n a_{ij}g_i$ where $g_1,\ldots,g_n$ are
  iid standard normal random variables. Then,
  \begin{equation*}
    \E \big[\max_{j} V_j\big] \leq \sqrt{2} (\log k)^{1/2} \max_{j,j'}
    \sqrt{\E\left[(V_j - V_{j'})^2\right]}.
  \end{equation*}
\end{lemma}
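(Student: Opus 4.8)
\textbf{Proof proposal for Lemma~\ref{lem:slepian} (Slepian).}

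The plan is to reduce the statement to the classical Slepian comparison inequality for Gaussian maxima, since the hypotheses are engineered precisely so that the comparison applies. First I would observe that because each $V_j = \sum_{i=1}^n a_{ij} g_i$ is a linear combination of the iid standard normals $g_1,\dots,g_n$, the vector $(V_1,\dots,V_k)$ is jointly centered Gaussian. Its covariance is determined by $\E[V_j V_{j'}] = \sum_{i=1}^n a_{ij} a_{ij'}$, and in particular $\E[V_j^2] = \sum_i a_{ij}^2$, so all second moments exist and are finite. The quantity controlling the maximum will turn out to be the canonical (pseudo)metric $d(V_j, V_{j'}) = \sqrt{\E[(V_j - V_{j'})^2]}$ on this Gaussian process indexed by the finite set $\{1,\dots,k\}$.

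The core step is to invoke the Sudakov–Fernique comparison (the form of Slepian's inequality that bounds expected maxima by the canonical metric rather than requiring a coordinatewise covariance ordering): for a finite centered Gaussian family, $\E[\max_j V_j] \leq C \sqrt{\log k}\,\max_{j,j'} d(V_j, V_{j'})$. I would either cite this directly from \citet{LedouxTalagrand1991} or, to be self-contained, derive it from the standard chaining/entropy bound. The cleanest self-contained route is: let $\sigma = \max_{j,j'} d(V_j,V_{j'})$ be the diameter of the index set in the canonical metric, so the covering number $N(\varepsilon)$ of $\{1,\dots,k\}$ at scale $\varepsilon$ is at most $k$ and is $1$ for $\varepsilon \geq \sigma$; then Dudley's entropy integral gives $\E[\max_j V_j] \leq C\int_0^\sigma \sqrt{\log N(\varepsilon)}\,d\varepsilon \leq C\sigma\sqrt{\log k}$. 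Tracking the constant through the one-step (single-scale) version of this argument, rather than the full integral, yields exactly the factor $\sqrt{2}$ stated in the lemma.

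The main obstacle is pinning down the constant $\sqrt{2}$ rather than an unspecified $C$. The generic chaining bound produces some absolute constant, but the sharp $\sqrt{2}$ comes from the elementary finite-union-bound estimate on Gaussian maxima: for centered Gaussians $W_1,\dots,W_k$ with $\max_j \E[W_j^2] \leq \tau^2$, one has $\E[\max_j W_j] \leq \tau\sqrt{2\log k}$, proved via the Laplace-transform (Chernoff) argument, $\E[\max_j W_j] \leq \frac{1}{\lambda}\log \sum_j \E[e^{\lambda W_j}] = \frac{1}{\lambda}\log\big(k\, e^{\lambda^2\tau^2/2}\big)$, optimized at $\lambda = \sqrt{2\log k}/\tau$. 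I would apply this not to the $V_j$ directly but to the differences $V_j - V_{j_0}$ for a fixed reference index $j_0$, noting that $\max_j V_j - V_{j_0} = \max_j (V_j - V_{j_0})$ and that each difference has variance at most $\max_{j,j'} \E[(V_j - V_{j'})^2]$; taking $\tau = \max_{j,j'}\sqrt{\E[(V_j-V_{j'})^2]}$ then delivers the claimed inequality, with the $\sqrt{2}$ emerging directly from the $\sqrt{2\log k}$ factor. The only care needed is the passage from $\E[\max_j V_j]$ (the lemma's left side, with no recentering) to $\E[\max_j (V_j - V_{j_0})]$: since the $V_j$ are centered, $\E[\max_j V_j] = \E[\max_j (V_j - V_{j_0})] + \E[V_{j_0}] = \E[\max_j(V_j - V_{j_0})]$, so the recentering is free and the bound transfers without loss.
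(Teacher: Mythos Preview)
Your argument is correct. The paper does not actually prove this lemma; it merely states it and cites \citet{LedouxTalagrand1991} and \citet{BartlettMendelson2002} as references, then immediately uses it in the proof of Theorem~\ref{thm:gauss}. So there is no ``paper's own proof'' to compare against beyond the citation.

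Your self-contained route---recentering by a fixed $V_{j_0}$ (free because $\E[V_{j_0}]=0$), then applying the Chernoff/Laplace-transform bound $\E[\max_j W_j]\leq \tau\sqrt{2\log k}$ with $\tau=\max_{j,j'}\sqrt{\E[(V_j-V_{j'})^2]}$---is exactly the elementary derivation that yields the sharp constant $\sqrt{2}$, and it is the standard way this inequality is established in the references the paper cites. The earlier material in your write-up about Sudakov--Fernique and Dudley chaining is accurate context but ultimately unnecessary: the final Chernoff step alone suffices, as you yourself note. If anything, you could trim the proposal to that last paragraph without loss.
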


\begin{proof}
  [Proof of Theorem~\ref{thm:gauss}]
  \begin{align*}
    \widehat{\mathfrak{G}}_n{\F} &= \E \sup_{f\in\F}
    \frac{2}{n}\sum_{i=1}^n g_i f(x_i) = \E \sup_{\bsp \in \mathcal{B}_p}
    \frac{2}{n}\sum_{i=1}^n g_i \langle \mathbb{X}_i,\bsp \rangle \\
    &= \E \sup_{\bsp \in \mathcal{B}_p}
    \left\langle \frac{2}{n}\sum_{i=1}^n g_i \mathbb{X}_i, \bsp\right\rangle
    =\E \sup_{\bsp \in conv(\mathcal{B}_p)}
    \left\langle \frac{2}{n}\sum_{i=1}^n g_i \mathbb{X}_i,\bsp\right\rangle,
  \end{align*}
  where the last equality follows from Theorem 12
  in~\citep{BartlettMendelson2002}. By standard results from convex
  optimization, this supremum is attained at one of the vertices of
  $conv(\mathcal{B}_p)$. Therefore,
  \begin{equation*}
   \widehat{\mathfrak{G}}_n(\F) = \E \big[\max_j \frac{2}{n} \sum_{i=1}^n
    g_i \langle\mathbb{X}_{i},\bsp_j\rangle\big],
  \end{equation*}
  where $\bsp_j$ is the $j^{th}$ vertex of $conv(\mathcal{B}_p)$. Let $V_j
  = \sum_{i=1}^n g_i \langle\mathbb{X}_{i},\bsp_j\rangle$. Then by the Lemma~\ref{lem:slepian},
  \begin{align*}
   \widehat{\mathfrak{G}}_n(\F) &\leq \frac{2\sqrt{2}}{n}(\log
    p+1)^{1/2} \max_{j,j'} \sqrt{\E[V_j - V_{j'}]^2} \\
   &= \frac{2\sqrt{2}}{n}(\log
    p+1)^{1/2} \max_{j,j'} \sqrt{\E \left[\sum_{i=1}^n g_i\left\langle\mathbb{X}_{i},
        \bsp_j-\bsp_{j'}\right\rangle\right]^2} \\
   &=\frac{2\sqrt{2}}{n}(\log p+1)^{1/2} \max_{1\leq j,j' \leq p}\sqrt{
    \sum_{i=1}^n \left\langle\mathbb{X}_{i},
        \bsp_j-\bsp_{j'}\right\rangle^2 } 
  \end{align*}
where $\mathbb{X}_{i}$ is the $i^{th}$-entry of the design matrix.
\end{proof}

When $p=1$, as in Corollary~\ref{cor:1}, we can calculate the complexity
directly. The proof's last line shows that we are essentially interested in the
diameter of the stability domain $\mathcal{B}_p$ projected onto the column
space of $\mathbb{X}$, which gives a tighter bound than that from the general
results on linear prediction in e.g.~\citet{KakadeSridharan2008}.

Since we care about the complexity of the model class $\F$ viewed through the
loss function $\ell$, we must also account for this additional complexity. For
$c$-Lipschitz loss functions, this just means multiplying $\mathfrak{G}_n(\F)$
by $2c$.

\section{Application}
\label{sec:application}

We illustrate our results by predicting interest rate changes --- specifically,
the 10-year Treasury Constant Maturity Rate series from the Federal Reserve
Bank of St. Louis' FRED database\footnote{Available at
  \url{http://research.stlouisfed.org/fred2/series/DGS10?cid=115}.}
--- recorded
daily from January 2, 1962 to August 31, 2010.  Transforming the series into
daily natural-log growth rates leaves $n=12150$ observations (Figure
\ref{fig:one}).
\begin{figure*}[t]
  \centering
  \includegraphics[width=5in]{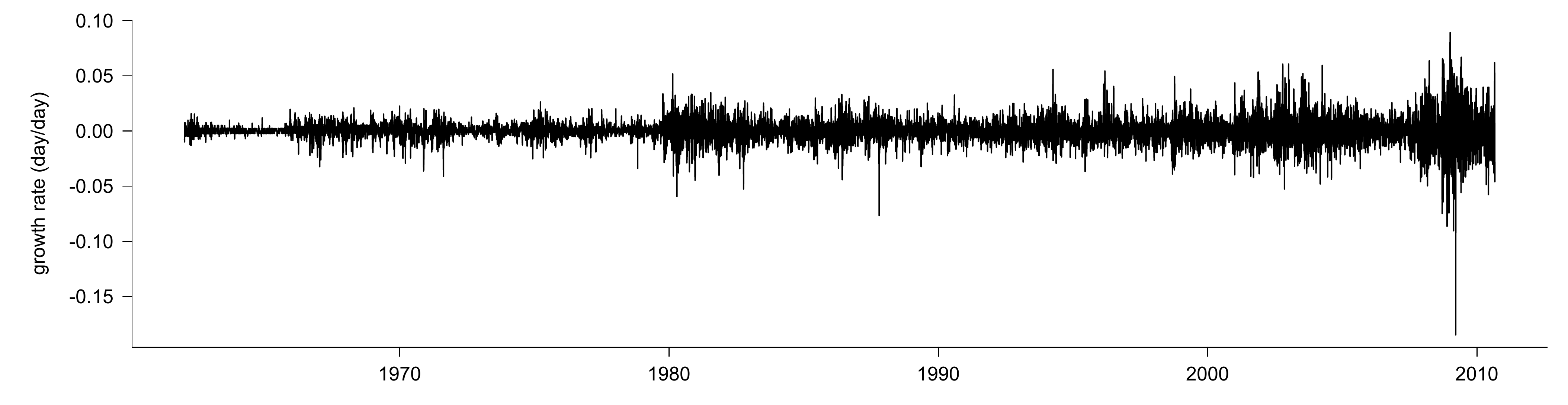}
  \caption{Growth rate of 10-year treasury bond}
  \label{fig:one}
\end{figure*}
The changing variance apparent in the figure is why interest rates are
typically forecast with GARCH(1,1) models.  For this illustration however, we
will use an AR$(p)$ model, picking the memory order $p$ by the risk bound.

Figure \ref{fig:two} shows the training error
\[
\widehat{R}_n(\widehat{f}) = \frac{1}{n-p}\sum_{t=p+1}^n (\widehat{X}_t - X_t)^2
\]
where $X_t$ is the $t^{th}$ datapoint, and $\widehat{X}_t$ is the model's
prediction. $\widehat{R}_n$ shrinks as the order of the model ($p$) grows, as
it must since ordinary least squares minimizes $\widehat{R}_n$ for a given $p$.
Also shown is the gap between the AIC for different $p$ and the lowest
attainable value; this would select an AR(36) model.
\begin{figure}
  \centering
  \includegraphics[width=2.5in]{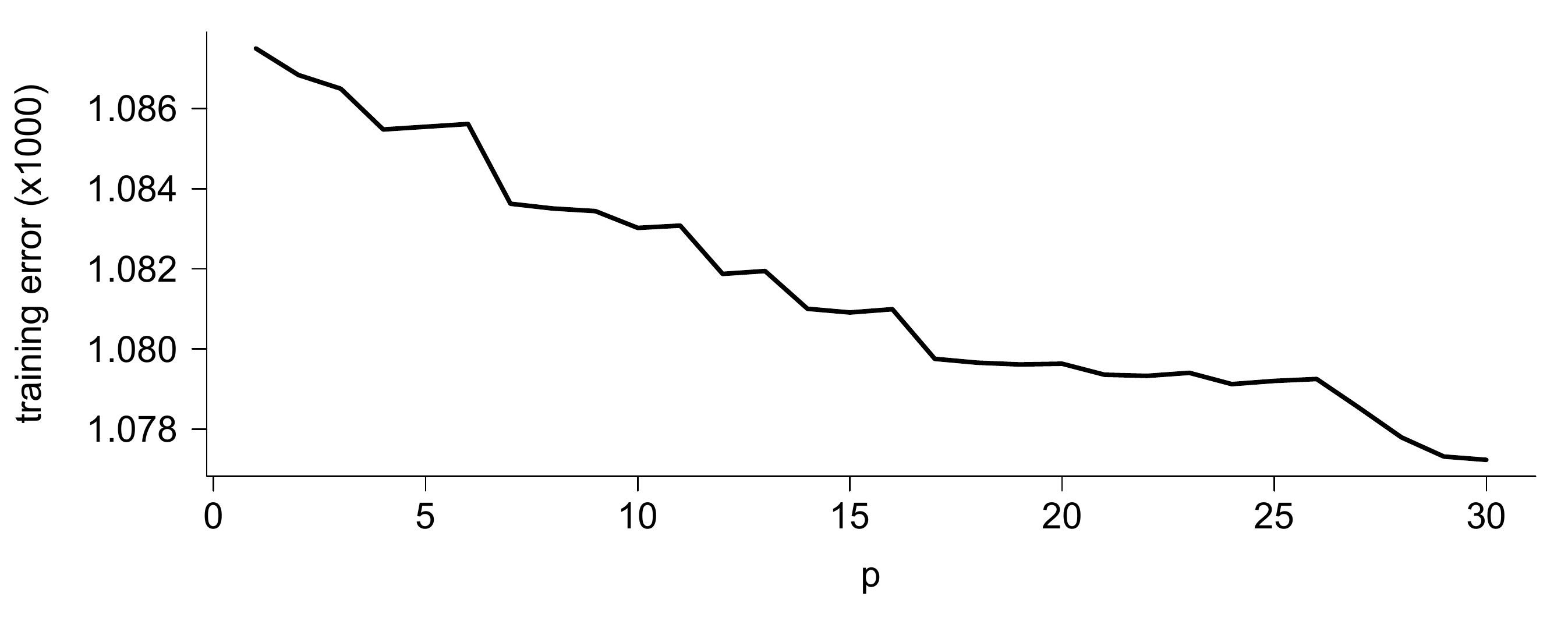}
  \includegraphics[width=2.5in]{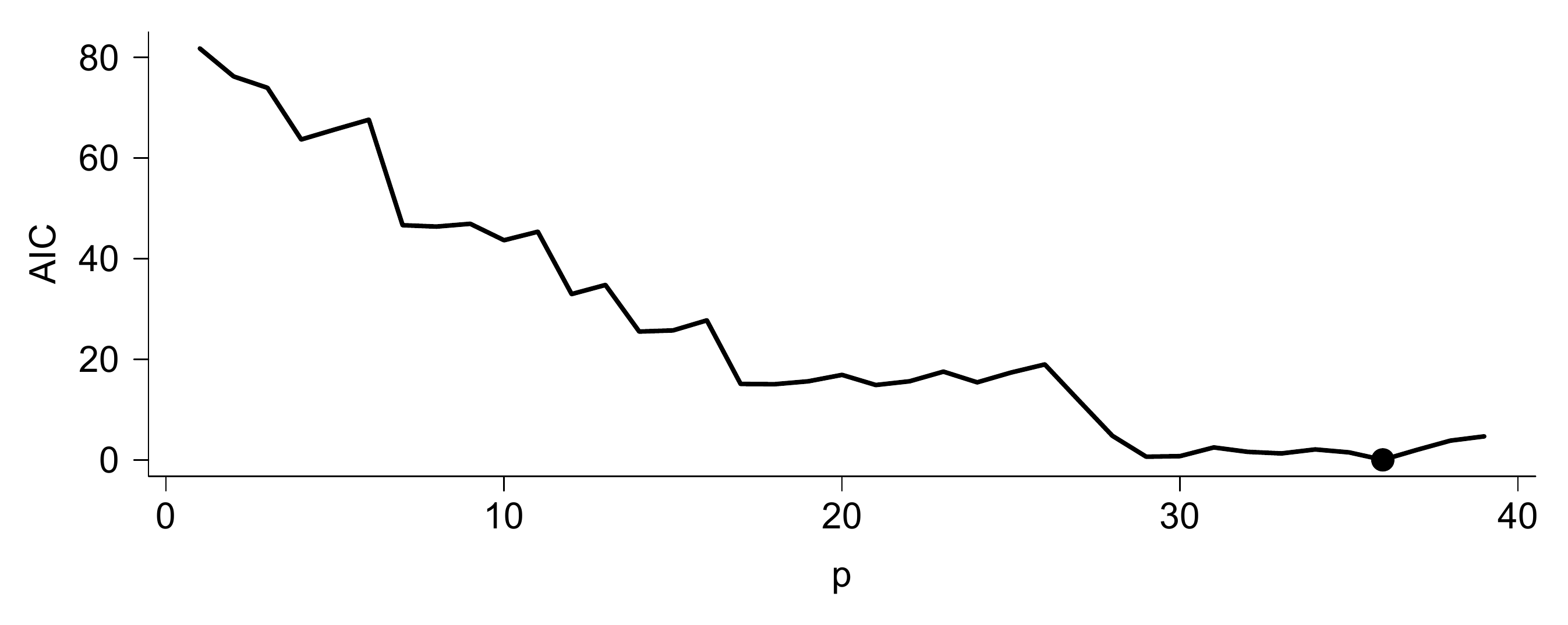}
  \caption{Training error (top panel) and AIC (bottom panel) against model
    order}
  \label{fig:two}
\end{figure}

A better strategy uses the probabilistic risk bound derived above. The goal of
model selection is to pick, with high probability, the model with the smallest risk;
this is Vapnik's structural risk minimization principle.  Here, it is clear
that AIC is dramatically overfitting. The optimal model using the risk bound is
an AR($1$). Figure \ref{fig:three} plots the risk bound against $p$ with the
loss function truncated at $0.05$. (No daily interest rate change has ever had
loss larger than $0.034$, and results are fairly insensitive to the level of
the loss cap.)  This bound says that with 95\% probability, \emph{regardless of
  the true data generating process}, the AR(1) model will make mistakes with
squared error no larger than $0.0079$. If we had instead predicted with zero,
this loss would have occurred three times.
\begin{figure}
  \centering
  \includegraphics[width=3in]{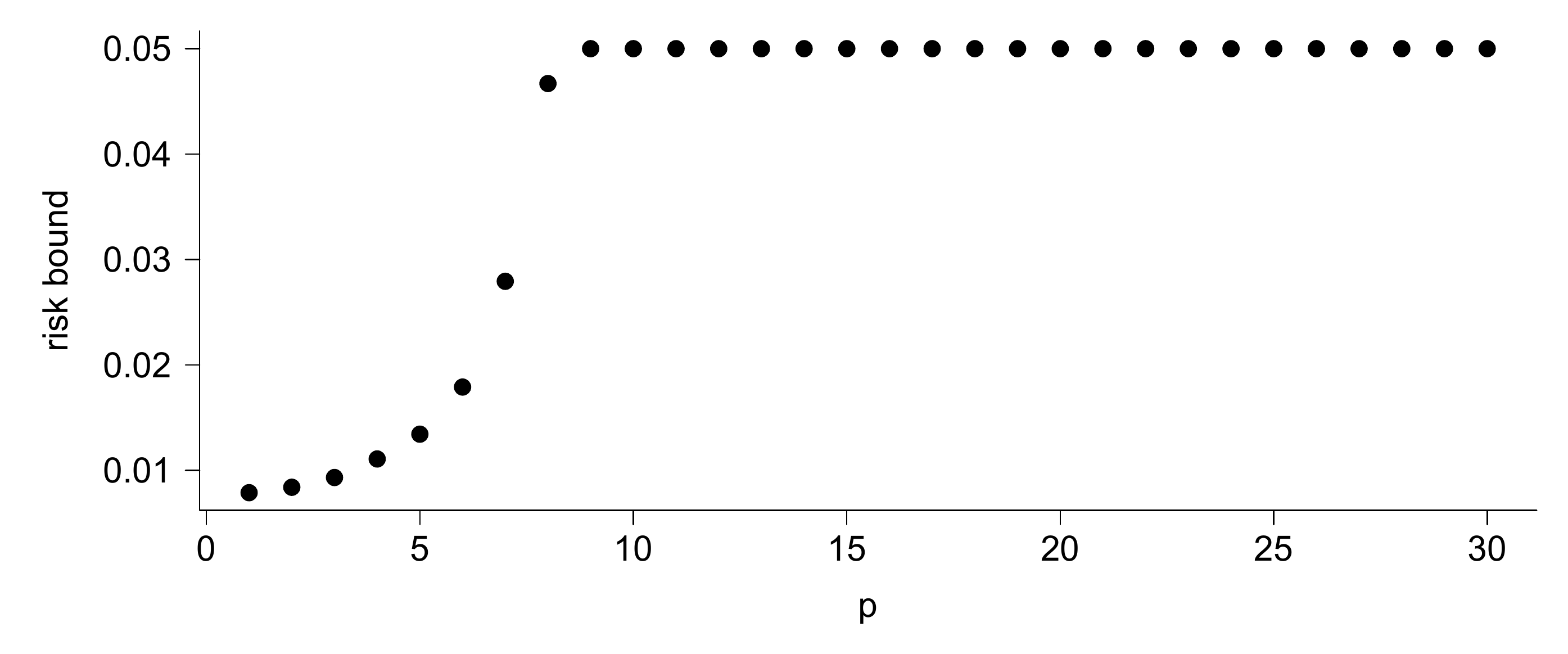}
 \caption{Generalization error bound for different model orders}
  \label{fig:three}
\end{figure}

One issue with Theorem \ref{thm:mohri} is that it requires knowledge of the
$\beta$-mixing coefficients, $\beta(m)$. Of course, the dependence structure of
this data is unknown, so we calculated it under generous assumptions on the
data generating process. In a homogeneous Markov process, the $\beta$-mixing
coefficients work out to
\[
  \beta(m) = \int \pi(dx) ||P^m(x,\cdot) - \pi||_{TV}
\]
where $P^m(x,\cdot)$ is the $m$-step transition operator and $\pi$ is the
stationary distribution \citep{Mokkadem1988,Davydov1973}. Since AR models are
Markovian, we estimated an AR$(q)$ model with Gaussian errors for $q$ large and
calculated the mixing coefficients using the stationary and transition
distributions. To create the bound, we used $m=7$ and $\mu=867$.  We address
non-parametric estimation of $\beta$-mixing coefficients elsewhere [Anon.].

\section{Discussion}
\label{sec:discussion}

We have constructed a finite-sample predictive risk bound for autoregressive
models, using the stationarity assumption to constrain OLS estimation.
Interestingly, stationarity --- a common assumption among applied researchers
--- constrains the model space enough to yield bounds without further
regularization.  Moreover, this is the first predictive risk bound we know of
for any of the standard models of time series analysis.

Traditionally, time series analysts have selected models by blending empirical
risk minimization, more-or-less quantitative inspection of the residuals (e.g.,
the Box-Ljung test; see \citep{ShumwayStoffer2000}), and AIC.  In many
applications, however, what really matters is prediction, and none of these
techniques, including AIC, controls generalization error, especially with
mis-specification.  (Cross-validation is a partial exception, but it is tricky
for time series; see \citep{Racine2000} and references therein.)  Our bound
controls prediction risk directly.  Admittedly, our bound covers only
univariate autoregressive models, the plainest of a large family of traditional
time series models, but we believe a similar result will cover the more
elaborate members of the family such as vector autoregressive,
autoregressive-moving average, or autoregressive conditionally heteroskedastic
models.  While the characterization of the stationary domain from
\citep{FamMeditch1978} on which we relied breaks down for such models, they are
all variants of the linear state space model
\citep{DurbinKoopman2001}, whose parameters are restricted under
stationarity, and so we hope to obtain a general risk bound, 
possibly with stronger variants for particular specifications.

\bibliography{AllReferences}
\bibliographystyle{plainnat}
\end{document}